\newcommand{\field}[1]{\mathbb{#1}}
\newcommand{\R}{\field{R}}
\newcommand{\C}{\field{C}}
\newcommand{\vect}[1]{\boldsymbol{#1}} 
\newcommand{\mat}[1]{\boldsymbol{#1}} 
\newcommand{\tvect}[1]{\tilde{\boldsymbol{#1}}} 
\newcommand{\tmat}[1]{\tilde{\boldsymbol{#1}}} 
\newcommand{\hvect}[1]{\hat{\boldsymbol{#1}}} 
\newcommand{\hmat}[1]{\hat{\boldsymbol{#1}}} 
\newcommand{\vzero}{\vect{0}}
\newcommand{\dummystring}{QWERTYU}
\newcommand{\vci}[3][\dummystr]{\ifthenelse{\equal{#1}{\dummystring}}{\vect{#2}_{#3}}{\vect{#2}_{#3}^{(#1)}}}
\newcommand{\mx}[3][\dummystr]{\ifthenelse{\equal{#1}{\dummystring}}{\mat{#2}_{#3}}{\mat{#2}_{#3}^{(#1)}}}
\newcommand{\tvci}[3][\dummystr]{\ifthenelse{\equal{#1}{\dummystring}}{\tvect{#2}_{#3}}{\tvect{#2}_{#3}^{(#1)}}}
\newcommand{\tmx}[3][\dummystr]{\ifthenelse{\equal{#1}{\dummystring}}{\tmat{#2}_{#3}}{\tmat{#2}_{#3}^{(#1)}}}
\newcommand{\hvci}[3][\dummystr]{\ifthenelse{\equal{#1}{\dummystring}}{\hvect{#2}_{#3}}{\hvect{#2}_{#3}^{(#1)}}}
\newcommand{\hmx}[3][\dummystr]{\ifthenelse{\equal{#1}{\dummystring}}{\hmat{#2}_{#3}}{\hmat{#2}_{#3}^{(#1)}}}
\DeclareMathOperator{\diag}{diag}
\DeclareMathOperator*{\argmax}{arg max}
\DeclareMathOperator*{\argmin}{arg min}
\newcommand{\Ex}{\mathrm{E}}
\newcommand{\Var}{\mathrm{Var}}
\newcommand{\Cov}{\mathrm{Cov}}
\newcommand{\Ind}[1]{\mathrm{I}_{\{#1\}}}
\newcommand{\Id}{\mat{I}}
\newcommand{\eps}{\mathrm{\varepsilon}}
\newcommand{\figref}[1]{Figure~\ref{fig:#1}}
\newcommand{\secref}[1]{Section~\ref{sec:#1}}
\renewcommand{\eqref}[1]{Eq.~\ref{eq:#1}}
\newcommand{\eqp}[1]{(\ref{eq:#1})}
\newcommand{\algref}[1]{Algorithm~\ref{alg:#1}}
\newcommand{\vb}[2][\dummystring]{\vci[#1]{b}{#2}}
\newcommand{\vd}[2][\dummystring]{\vci[#1]{d}{#2}}
\newcommand{\vf}[2][\dummystring]{\vci[#1]{f}{#2}}
\newcommand{\vg}[2][\dummystring]{\vci[#1]{g}{#2}}
\newcommand{\vs}[2][\dummystring]{\vci[#1]{s}{#2}}
\newcommand{\vu}[2][\dummystring]{\vci[#1]{u}{#2}}
\newcommand{\vv}[2][\dummystring]{\vci[#1]{v}{#2}}
\newcommand{\vy}[2][\dummystring]{\vci[#1]{y}{#2}}
\newcommand{\vz}[2][\dummystring]{\vci[#1]{z}{#2}}
\newcommand{\vdelta}[2][\dummystring]{\vci[#1]{\delta}{#2}}
\newcommand{\veps}[2][\dummystring]{\vci[#1]{\eps}{#2}}
\newcommand{\vth}[2][\dummystring]{\vci[#1]{\theta}{#2}}
\newcommand{\vmu}[2][\dummystring]{\vci[#1]{\mu}{#2}}
\newcommand{\vpi}[2][\dummystring]{\vci[#1]{\pi}{#2}}
\newcommand{\vrho}[2][\dummystring]{\vci[#1]{\rho}{#2}}
\newcommand{\tvb}[2][\dummystring]{\tvci[#1]{b}{#2}}
\newcommand{\tvf}[2][\dummystring]{\tvci[#1]{f}{#2}}
\newcommand{\tvy}[2][\dummystring]{\tvci[#1]{y}{#2}}
\newcommand{\tvth}[2][\dummystring]{\tvci[#1]{\theta}{#2}}
\newcommand{\tvpi}[2][\dummystring]{\tvci[#1]{\pi}{#2}}
\newcommand{\mxa}[2][\dummystring]{\mx[#1]{A}{#2}}
\newcommand{\mxb}[2][\dummystring]{\mx[#1]{B}{#2}}
\newcommand{\mxf}[2][\dummystring]{\mx[#1]{F}{#2}}
\newcommand{\mxl}[2][\dummystring]{\mx[#1]{L}{#2}}
\newcommand{\mxx}[2][\dummystring]{\mx[#1]{X}{#2}}
\newcommand{\rng}[2][1]{{#1},\dots,{#2}}
\newcommand{\srng}[2][1]{\{{#1},\dots,{#2}\}}
\definecolor{gray}{rgb}{.5,.5,.5}
\newtheorem{theo}{Theorem}
\newcommand{\BlackBox}{\rule{1.5ex}{1.5ex}}  
\newenvironment{proof}{\par\noindent{\bf Proof\ }}{\hfill\BlackBox\\[2mm]}
\title{Fast Convergent Algorithms for Expectation Propagation Approximate Bayesian Inference}
\author{Matthias W. Seeger\thanks{Probabilistic Machine Learning Laboratory, Ecole Polytechnique F\'{e}d\'{e}rale de Lausanne, INJ 339, Station 14, CH-1015 Lausanne, ({\tt matthias.seeger@epfl.ch}).}
        \and Hannes Nickisch \thanks{Max Planck Institute for Biological Cybernetics, Spemannstra{\ss}e 38, 72076 T\"{u}bingen, ({\tt hn@tuebingen.mpg.de}).}}
\begin{document}

\maketitle

\begin{abstract}%
We propose a novel algorithm to solve the expectation propagation relaxation
of Bayesian inference for continuous-variable graphical models. In contrast
to most previous algorithms, our method is provably convergent. By marrying
convergent EP ideas from \cite{Opper:05} with covariance decoupling techniques
\cite{Wipf:08,Nickisch:09}, it runs at least an order of magnitude faster than
the most commonly used EP solver.
\end{abstract}

\section{Introduction}\label{sec:intro}

A growing number of challenging machine learning applications require
decision-making from incomplete data (e.g., stochastic optimization, active
sampling, robotics), which relies on quantitative representations of
uncertainty (e.g., Bayesian posterior, belief state) and is out of reach of
the commonly used paradigm of learning as point estimation on hand-selected
data. While Bayesian inference is harder than point estimation in general, it
can be relaxed to {\em variational} optimization problems which can be
computationally competitive, if only they are treated with the algorithmic
state-of-the-art established for the latter.

In this paper, we propose a novel algorithm for the expectation
propagation (EP; or adaptive TAP, or expectation consistent (EC)) relaxation
\cite{Opper:01,Minka:01a,Opper:05}, which is both much faster than the
commonly used sequential EP algorithm, and is provably convergent (the
sequential algorithm lacks such a guarantee). Our method builds on the
convergent double loop algorithm of \cite{Opper:05}, but runs orders of
magnitude faster. We gain a deeper understanding of EP (or EC) as optimization
problem, unifying it with covariance decoupling ideas
\cite{Wipf:08,Nickisch:09}, and allowing for ``point estimation'' algorithmic
progress to be brought to bear on this powerful approximate inference
formulation.

Suppose that observations $\vy{}\in\R^m$ are modelled as $\vy{} =
\mxx{}\vu{}+\veps{}$, where $\vu{}\in\R^n$ are latent variables of interest,
$\veps{}\sim N(\vzero,\sigma^2\Id)$ is Gaussian noise, and $\mxx{}\in
\R^{m\times n}$ is the design matrix. For example, $\vu{}$ can be an
image to be reconstructed from $\vy{}$ (e.g., Fourier coefficients in magnetic
resonance imaging \cite{Seeger:08a}), further examples are found in
\cite{Seeger:07d}. The prior distribution has the form
$P(\vu{})\propto \prod_{i=1}^q t_i(s_i)$ with {\em non-Gaussian} potentials
$t_i(\cdot)$, and $\vs{}:=\mxb{}\vu{}$ for a matrix $\mxb{}$. A well-known
example are {\em Laplace sparsity priors} defined by $t_i(s_i) =
e^{-\tau_i|s_i|}$ \cite{Seeger:07d}, where $\mxb{}$ collects simple filters
(e.g., derivatives, wavelet coefficients). This formal setup also encompasses
binary classification ($\vu{}$ classifier weights,
$\prod_{i=1}^q t_i(s_i)$ the classification likelihood \cite{Nickisch:09})
or spiking neuron models \cite{Gerwinn:08}. The posterior distribution is
\begin{equation}\label{eq:posterior}
  P(\vu{}|\vy{}) = Z^{-1} N(\vy{}|\mxx{}\vu{},\sigma^2\Id)
  \prod\nolimits_{i=1}^q t_i(s_i),
\end{equation}
$Z := \int N(\vy{}|\mxx{}\vu{},\sigma^2\Id)\prod_{i=1}^q t_i(s_i)\,
d\vu{}$ the {\em partition function} for $P(\vu{}|\vy{})$, and
$\vs{}=\mxb{}\vu{}$. {\em Bayesian inference} amounts to computing moments of
$P(\vu{}|\vy{})$ and/or $\log Z$. Hyperparameters $\vf{}$ can be learned by
maximizing $\log Z(\vf{})$ \cite{MacKay:03} (e.g., motion deblurring by blind
deconvolution \cite{Levin:09}). In Bayesian
experimental design (or active learning) \cite{Nickisch:09}, $\mxx{}$ is built
up sequentially by greedily maximizing expected information scores. These
applications require posterior covariance information beyond any single point
estimate.

The expectation propagation relaxation along with known algorithms is
described in \secref{expprop}, scalable inference techniques reviewed in
\secref{scalable}. We develop our novel algorithm in \secref{scal-ep},
provide a range of real-world experiments (image deblurring and reconstruction)
in \secref{exper}, and close with a discussion (\secref{discuss}). Upon
publication, code for our algorithm will be released into the public domain.

\section{Expectation Propagation}
\label{sec:expprop}

Expectation propagation (EP) \cite{Minka:01a,Opper:05} stands out among
variational inference approximations. First, it is more generally applicable
than most others (see end of \secref{scalable}). Second, a range of empirical
studies indicate that EP can be a far more accurate approximation to Bayesian
inference than today's competitors of comparable running time
\cite{Kuss:05,Nickisch:08}. Consequently, EP has been applied to a diverse
range of models.\footnote{
  A comprehensive bibliography can be found at
  \url{research.microsoft.com/en-us/um/people/minka/papers/ep/roadmap.html}.}
On the other hand, EP is more difficult to handle than most other methods, for
a number of reasons. It is not an optimization problem based on a bound on
$\log Z$ \eqp{posterior}, but constitutes a search for a saddle point
\cite{Opper:05}. Moreover, its stationary equations
are more complicated in structure than commonly used bounds.
Finally, running EP can be numerically challenging \cite{Seeger:07d,Barber:06}.

In the sequel, we describe the variational optimization problem behind
(fractional) EP, details can be found in
\cite{Minka:01a,Opper:05,Seeger:07d}. The goal is to fit the posterior
distribution $P(\vu{}|\vy{})$ from \eqp{posterior} by a
{\em Gaussian}
of the form
\begin{eqnarray}
  Q(\vu{}|\vy{}) := Z_Q^{-1} N(\vy{}|\mxx{}\vu{},\sigma^2\Id) e^{\vb{}^T\vs{}
  -\frac{1}2\vs{}^T(\diag\vpi{})\vs{}}, \nonumber \\
  \Cov_Q[\vu{}|\vy{}]^{-1} = \mxa{} := \sigma^{-2}\mxx{}^T\mxx{} +
  \mxb{}^T(\diag\vpi{})\mxb{}, \label{eq:post-approx}
\end{eqnarray}
where $Z_Q := \int N(\vy{}|\mxx{}\vu{},\sigma^2\Id) e^{\vb{}^T\vs{}
-\frac{1}2\vs{}^T(\diag\vpi{})\vs{}}\, d\vu{}$, $\vs{}=\mxb{}\vu{}$.
$Q(\vu{}|\vy{})$ depends on the variational parameters
$\vb{}$ and $\vpi{}\succeq\vzero$, collected as $\vth{} = (\vpi{},\vb{})$
below. Let marginal distributions $N(\mu_i,\rho_i)$ be indexed by moment
parameters $\vmu{}$, $\vrho{}$, $\eta\in(0,1]$ a fractional parameter
(while standard EP uses $\eta=1$, $\eta<1$ can
strongly improve numerical stability \cite{Seeger:07d}). For $i\in\srng{q}$,
denote $\kappa_i = \kappa_i(s_i) := b_i s_i - \frac{1}2 \pi_i s_i^2$. The
{\em cavity marginal} is $Q_{-i}(s_i) \propto N(s_i|\mu_i,\rho_i)
e^{-\eta\kappa_i}$, the {\em tilted marginal} $\hat{P}_i(s_i) \propto
Q_{-i}(s_i) t_i(s_i)^{\eta}$. While $\hat{P}_i(s_i)$ is not a Gaussian,
its moments (mean and variance) can be computed tractably. An EP fixed point
$(\vpi{},\vb{})$ satisfies {\em expectation consistency} \cite{Opper:05}: if
$N(\mu_i,\rho_i) = Q(s_i|\vy{})$, then $\hat{P}_i(s_i)$ and $Q(s_i|\vy{})$ have
the same mean and variance for all $i=\rng{q}$. The corresponding (negative
free) energy function is
\[
\begin{split}
  & \phi(\vpi{},\vb{},\vmu{},\vrho{}) := -2\log Z_Q \\
  & - {\textstyle\frac{2}{\eta}}\sum\nolimits_{i=1}^q
  \left( \log\Ex_{Q_{-i}}[t_i(s_i)^{\eta}] - \log\Ex_{Q_{-i}}[e^{\eta\kappa_i}]
  \right),
\end{split}
\]
where $Z_Q$ is the partition function of $Q(\vu{}|\vy{})$ (see
Eq.~\ref{eq:post-approx}). If
we define $\vmu{}$, $\vrho{}$ in terms of $\vpi{}$, $\vb{}$ (by requiring
that $N(\mu_i,\rho_i) = Q(s_i|\vy{})$), it is easy to see that
$\nabla_{\vpi{}}\phi = \nabla_{\vb{}}\phi = \vzero$ implies expectation
consistency. However, this dependency tends to be broken intermediately in
most EP algorithms. A schematic overview of the expectation consistency
conditions is as follows (notations $\tvth{}, \vth{-}, \vs{*}, \vz{}$ are
introduced in subsequent sections; $\stackrel{\text{MM}}{\longleftrightarrow}$
denotes Gaussian moment matching):
\begin{equation}\label{eq:ep-schematic}
  \begin{array}{ccc}
  \overbrace{N(\mu_i,\rho_i)}^{\tvth{}\leftrightarrow(\vmu{},\vrho{})} & \to &
  \overbrace{Q_{-i}(s_i)\propto N(s_i|\mu_i,\rho_i)
  e^{-\eta\kappa_i}}^{\vth{-} (=\tvth{}-\eta\vth{})} \\
  & & \downarrow \\
  \underbrace{Q(s_i|\vy{})}_{=N(s_{* i},z_i)} &
  \stackrel{\text{MM}}{\longleftrightarrow} & \hat{P}_i(s_i) \propto
  Q_{-i}(s_i) t_i(s_i)^{\eta}
  \end{array}
\end{equation}
The total criterion $\phi(\vpi{},\vb{},\vmu{}(\vpi{},\vb{}),
\vrho{}(\vpi{},\vb{}))$ is neither convex nor concave \cite{Opper:05}.

The most commonly used {\em sequential} EP algorithm visits each potential
$i\in\srng{q}$ in turn, first updating $\mu_i$, $\rho_i$, then $\pi_i$, $b_i$
based on one iteration\footnote{
  ``One iteration'' means solving for $\pi_i$, $b_i$, assuming that the cavity
  distribution $Q_{-i}(s_i)$ is fixed (ignoring its dependence on
  $\pi_i$, $b_i$).}
of $\partial_{\pi_i}\phi = \partial_{b_i}\phi = 0$ \cite{Minka:01a,Opper:05}.
For models of moderate size $n$, a numerically robust implementation maintains
the inverse covariance matrix $\mxa{}$ \eqp{post-approx} as
representation of $Q(\vu{}|\vy{})$. A sweep over all potentials costs
$O(q\, n^2)$. If memory costs of $O(n^2)$ are prohibitive, we can determine
$\mu_i$, $\rho_i$ on demand by solving a linear system with $\mxa{}$, in
which case a sweep requires $q$ such systems. The sequential EP algorithm is
too slow to be useful for many applications. Notably, all publications for EP
we are aware of (with the exception of two references discussed in the
sequel) employ this method, generally known as ``the EP algorithm''.

In \cite{Gerven:10}, a {\em parallel} variant of EP is applied to rather large
models of a particular structure. They alternate between updates of all
$\vmu{}$, $\vrho{}$ and all $\vpi{}$, $\vb{}$, the latter by one iteration of
$\partial_{\vpi{}}\phi = \partial_{\vb{}}\phi = \vzero$ (these equations
decouple w.r.t.\ $i=\rng{q}$). The most expensive step per iteration by far is
the computation of marginal variances $\vrho{}$, which is feasible only for the
very sparse
matrices $\mxa{}$ specific to their application. Neither sequential nor
parallel algorithm come with a convergence proof.

A provably convergent double loop algorithm for EP is given by Opper\&Winther
in \cite{Opper:05}. For its derivation, we need to consider a {\em natural
parameterization} of the problem. The underlying reason for this is that
log partition functions like $\log Z_Q$ \eqp{post-approx} are simple convex
functions in natural parameters, and derivatives w.r.t.\ the latter result
in posterior expectations. Collect $\vth{} = (\vpi{},\vb{})$ and recall that
$\kappa_i = b_i s_i - \frac{1}2 \pi_i
s_i^2$. Let $\tvth{}=(\tvpi{},\tvb{})$ be natural
parameters corresponding to $\vmu{}, \vrho{}$ ($\tilde{\pi}_i=1/\rho_i$,
$\tilde{b}_i=\mu_i/\rho_i$), and $\tilde{\kappa}_i = \tilde{b}_{i} s_i -
\frac{1}2\tilde{\pi}_{i} s_i^2$, so that $N(s_i|\mu_i,\rho_i) = Z_i^{-1}
e^{\tilde{\kappa}_i}$, where $Z_i = \int e^{\tilde{\kappa}_i}\, d s_i$ is the
normalization constant. With
$\vth{-} = (\vpi{-},\vb{-}) = \tvth{}-\eta\vth{}$ and $\kappa_{-i} = b_{-i} s_i
- \frac{1}2\pi_{-i} s_i^2 = \tilde{\kappa}_i - \eta\kappa_i$, we have that
$Q_{-i}(s_i)\propto e^{\kappa_{-i}}$
and $\hat{P}_i(s_i) = \hat{Z}_i^{-1} e^{\kappa_{-i}} t_i(s_i)^{\eta}$ with
$\hat{Z}_i = \int e^{\kappa_{-i}} t_i(s_i)^{\eta}\, d s_i$. If
$\phi_{\cap}(\vth{-},\tvth{}) := -\frac{2}{\eta}\sum_i \log\hat{Z}_i -2\log Z_Q$
and $\phi_{\cup}(\tvth{}) := \frac{2}{\eta}\sum_i \log Z_i$, we have that
$\phi(\vth{-},\tvth{}) = \phi_{\cap}(\vth{-},\tvth{}) + \phi_{\cup}(\tvth{})$,
where $\phi_{\cap}(\vth{-},\tvth{})$ is jointly concave\footnote{
  Log partition functions ($\log\hat{Z}_i$, $\log Z_Q$) are convex in their
  natural parameters, and $\vth{} = \eta^{-1}(\tvth{}-\vth{-})$ is linear.},
while $\phi_{\cup}(\tvth{})$ is convex. Define $\phi(\tvth{}) := \max_{\vth{-}}
\phi(\vth{-},\tvth{})$. The Opper\&Winther algorithm (locally) minimizes $\phi(\tvth{})$
via two nested loops. The inner loop (IL) is the concave maximization
$\vth{-}\leftarrow\argmax_{\vth{-}}\phi_{\cap}(\vth{-},\tvth{})$ for fixed
$\tvth{}$. An outer loop (OL) iteration consists of an IL followed by an update
of $\tvth{}$: $\vmu{}\leftarrow\Ex_Q[\vs{}|\vy{}]$, $\vrho{}\leftarrow
\Var_Q[\vs{}|\vy{}]$. Within the schema \eqp{ep-schematic}, the IL ensures
expectation consistency $\stackrel{\text{MM}}{\longleftrightarrow}$ in the
lower row, while the OL update equates marginals in the left column. While
this algorithm provably converges to a stationary
point of $\phi(\tvth{})$ whenever the criterion is lower bounded \cite{Opper:05}, it
is expensive to run, as variance computations $\Var_Q[\vs{}|\vy{}]$ are
required frequently during the IL optimization (convergence and properties are
discussed in the Appendix). Finally, since $\vth{} = \eta^{-1}(\tvth{} -
\vth{-})$, concave maximization w.r.t.\ $\vth{-}$ for fixed $\tvth{}$
can equivalently be seen as concave maximization w.r.t.\ $\vth{}$. We will
do the latter for notational convenience in the sequel.

\section{Scalable Variational Inference}\label{sec:scalable}

Scalable algorithms for a variational inference relaxation\footnote{
  In contrast to EP, this relaxation is convex iff all $t_i(s_i)$ are
  log-concave \cite{Nickisch:09}.}
different from EP have been proposed in \cite{Nickisch:09,Seeger:10a} (this
relaxation is called VB in the sequel, for ``Variational Bounding''). They
can be used whenever all potentials are super-Gaussian, meaning that $t_i(s_i)
= \max_{\pi_i>0} e^{b_i s_i - \frac{1}2\pi_i s_i^2 - h_i(\pi_i)/2}$ for some
$h_i(\pi_i)$, which implies the bound $-2\log Z \le \phi^{\text{VB}}(\vpi{})
:= -2\log Z_Q + h(\vpi{})$ on the log partition function of $P(\vu{}|\vy{})$
(up to an additive constant), where $h(\vpi{}) := \sum_i
h_i(\pi_i)$. Note that in this relaxation, $\vb{}$ is fixed
up front ($\vb{}=\vzero$ if all potentials $t_i(s_i)$ are even), and $\vpi{}$
are the sole variational parameters. They proceed in two steps. First,
$-2\log Z_Q = \log|\mxa{}| + \min_{\vu{*}} R(\vpi{},\vb{},\vu{*})$ (up to an
additive constant), where $R(\vpi{},\vb{},\vu{*}) :=
\sigma^{-2}\|\vy{}-\mxx{}\vu{*}\|^2 + \vs{*}^T(\diag\vpi{})\vs{*} -
2\vb{}^T\vs{*}$, $\vs{*}=\mxb{}\vu{*}$. Second, since $\vpi{}\mapsto
\log|\mxa{}|$ is a concave function, Fenchel duality
\cite[ch.~12]{Rockafellar:70} implies that
$\log|\mxa{}| = \min_{\vz{}} \vz{}^T\vpi{} - g^*(\vz{})$ for some
$g^*(\vz{})$. The variational problem becomes
\begin{eqnarray}
  && \min_{\vpi{}\succ\vzero}\phi^{\text{VB}}(\vpi{})\label{eq:sgb-problem} \\
  &=& \min_{\vz{}\succ\vzero}\min_{\vpi{}\succ\vzero,\vu{*}} \vz{}^T\vpi{} -
  g^*(\vz{}) + R(\vpi{},\vb{},\vu{*}) + h(\vpi{}). \nonumber
\end{eqnarray}
It is solved by a double loop algorithm, alternating between inner loop (IL)
minimizations w.r.t.\ $\vpi{},\vu{*}$ for fixed $\vz{}$ and outer
loop (OL) updates of $\vz{}$ and $g^*(\vz{})$.

The important difference to both the double loop algorithm of \cite{Opper:05}
and the parallel algorithm of \cite{Gerven:10} lies in the {\em decoupling}
transformation $\log|\mxa{}| = \min_{\vz{}} \vz{}^T\vpi{}
- g^*(\vz{})$. $\phi^{\text{VB}}(\vpi{})$ is hard to minimize due to the
coupling term $\log|\mxa{}|$. For example, $\nabla_{\vpi{}}\log|\mxa{}| =
\diag(\mxb{}\mxa{}^{-1}\mxb{}^T) = \Var_Q[\vs{}|\vy{}]$ requires Gaussian
variance computations, which are very expensive in practice \cite{Seeger:10a}.
But $\log|\mxa{}|$ is replaced by a fixed linear function in each IL problem,
where we can eliminate $\vpi{}$ analytically and are left with a {\em penalized
least squares} problem of the form
$\min_{\vu{*}}\sigma^{-2}\|\vy{}-\mxx{}\vu{*}\|^2 -\sum_i \psi_i(s_{* i})$, easy to
solve with standard algorithms that do not need Gaussian variances at all. To
understand the decoupling transformation more generally, consider
minimizing \eqp{sgb-problem} w.r.t.\ each variable in turn, keeping the others
fixed. The solutions are $\vu{*}=\Ex_Q[\vu{}|\vy{}]$ (means) and
$\vz{}=\nabla_{\vpi{}}\log|\mxa{}| = \Var_Q[\vs{}|\vy{}]$ (variances). The role
of decoupling is to {\em split between computations of
means and variances} \cite{Seeger:10a}: the latter, much more expensive to
obtain in general, are required at OL update points only, much less frequently
than the former (means) which are obtained by solving a single linear system.

Note that several important models come with potentials which are not
super-Gaussian (e.g., Poisson potentials for spiking neuron models
\cite{Gerwinn:08}, or potentials like the exponential, which become zero),
but can easily be handled with EP. Moreover, EP seems to be substantially more
accurate as approximation to Bayesian inference \cite{Kuss:05,Nickisch:08}.
To construct an efficient EP solver, we have to make use of
decoupling in a similar fashion, so to {\em minimize the number of Gaussian
variances computations}, while retaining provable convergence.

\section{Speeding up Expectation Propagation}\label{sec:scal-ep}

A fast and convergent EP algorithm is obtained by marrying the
double loop algorithm of \cite{Opper:05} with the decoupling trick of
\cite{Nickisch:09}. During its course, $\tvth{}$ (or $\vmu{},\vrho{}$) will
mainly be fixed, and we will drop it from notation accordingly (but recall
that the $\hat{Z}_i$ depend on it). Moreover, we will typically work with
$\vth{}=(\tvth{}-\vth{-})/\eta$ rather than $\vth{-}$. Then,
\begin{eqnarray}
  && \phi_{\cap}(\vth{}) \label{eq:phi_cap} \\
  &=& \min_{\vz{},\vu{*}} \underbrace{\vz{}^T\vpi{} - g^*(\vz{}) +
  R(\vpi{},\vb{},\vu{*}) -2\eta^{-1}\sum\nolimits_i \log\hat{Z}_i }_{=:
  \phi_{\cap}(\vv{},\vth{}),\;\; \vv{}=(\vz{},\vu{*})} \nonumber \\
  &=& \min_{\vz{},\vu{*}} \sigma^{-2}\|\vy{}-\mxx{}\vu{*}\|^2 -
  \sum\nolimits_i \psi_i(s_{* i},\pi_i,b_i) - g^*(\vz{}), \nonumber \\
  && \psi_i := -(z_i+s_{* i}^2)\pi_i + 2 b_i s_{* i} + 2\eta^{-1}\log\hat{Z}_i.
  \nonumber
\end{eqnarray}
With $\vv{}=(\vz{},\vu{*})$ and $\phi_{\cap}(\vth{}) = \min_{\vv{}}
\phi_{\cap}(\vth{},\vv{})$, the IL problem of \cite{Opper:05} is
$\max_{\vth{}}\min_{\vv{}}\phi_{\cap}$. As shown in the Appendix,
$\phi_{\cap}(\vth{},\vv{})$ is a closed proper {\em concave-convex function}
(convex in $\vv{}$ for each $\vth{}$, concave in $\vth{}$ for each $\vv{}$)
\cite{Rockafellar:70}. Strong duality holds:
$\max_{\vth{}}\min_{\vv{}}\phi_{\cap} = \min_{\vv{}}\max_{\vth{}}
\phi_{\cap}$, so the IL problem is equivalent to
\begin{eqnarray}
  &&\hspace{-.4cm} \min_{\vz{}}\left( \min_{\vu{*}} \sigma^{-2}\|\vy{}-\mxx{}\vu{*}\|^2 -
  \sum\nolimits_i \psi_i(s_{* i}) \right) - g^*(\vz{}), \nonumber \\
  &&\hspace{-.4cm} \psi_i(s_{i}) := \min_{\pi_i,b_i}\psi_i(s_{i},\pi_i,b_i). \label{eq:ow-il}
\end{eqnarray}
This problem is jointly convex in $\vz{},\vu{*}$ (note that $\psi_i(s_{* i})$ is
concave as minimum of concave functions, and the minimization over $\pi_i,b_i$
is a jointly convex problem). Solving the inner problem of \eqp{ow-il} for
fixed $\vz{}$ is a simple and very efficient penalized least squares
building block, denoted by $(\vu{*},\vth{})\leftarrow\mathtt{PLS}(\vz{},
\tvth{})$ in the sequel. Note that at its solution, $\vu{*} =
\Ex_Q[\vu{}|\vy{}]$, where $Q(\vu{}|\vy{})$ is indexed by $\vth{}$.

This means that the problem addressed in \cite{Opper:05} can be written in the
form $\min_{\vz{},\tvth{}}\phi(\vz{},\tvth{})$. The significance is the same
as in \secref{scalable}: both $\phi(\vz{},\tvth{})$ and
$\min_{\tvth{}}\phi(\vz{},\tvth{})$ (local minimum) for fixed $\vz{}$ can be
determined very efficiently. The dominating cost of computing Gaussian
variances is concentrated in the update of $\vz{}$. Two main ideas lead to
the algorithm we propose here. First, we descend on $\phi(\vz{},\tvth{})$
rather than $\phi(\tvth{}) = \min_{\vz{}}\phi(\vz{},\tvth{})$ \cite{Opper:05},
saving on variance computations. One iteration of our method determines
$\vz{}\leftarrow\Var_Q[\vs{}|\vy{}]$, then a local minimum
$\min_{\tvth{}}\phi(\vz{},\tvth{})$ in a convergent way. Empirically, such
``optimistic'' iterations seem to always descend on $\phi(\vz{},\tvth{})$
until convergence to a stationary point of $\phi(\tvth{})$, but just as for the
sequential or parallel algorithm, we cannot establish this rigorously. At 
this point, the second idea is to rely on the inner loop optimization of
\cite{Opper:05} in order to enforce descent eventually. We obtain a provably
convergent algorithm by combining optimistic steps $\min_{\tvth{}}
\phi(\vz{},\tvth{})$ for fixed $\vz{}$ with the rigorous but slow mechanism
of \cite{Opper:05}. As most, if not all optimistic steps produce sufficient
descent in practice, provable convergence comes almost for free (in
contrast to \cite{Opper:05}, where it carries a large price tag).

To flesh out this notion, denote\footnote{
  In the sequel, we will eliminate $\vu{*}$ by minimization in our notation.
  Since strong duality holds, we can move $\min_{\vu{*}}$ outside when
  solving $\mathtt{PLS}$ \eqp{ow-il} at any time (for fixed $\vz{},
  \tvth{}$).}
$\phi(\vth{},\vz{},\tvth{}) = \vz{}^T\vpi{} - g^*(\vz{}) + (\min_{\vu{*}}
R(\vpi{},\vb{},\vu{*})) - 2\eta^{-1}\sum\nolimits_i \log\hat{Z}_i$, and
$\phi(\vz{},\tvth{}) = \max_{\vth{}}\phi(\vth{},\vz{},\tvth{})$. Note that
$\phi(\vth{},\tvth{}) = \min_{\vz{}}\phi(\vth{},\vz{},\tvth{})$, moreover
$\max_{\vth{}}\phi(\vth{},\tvth{}) = \min_{\vz{}}\max_{\vth{}}\phi(\vth{},\vz{},
\tvth{})$ by strong duality. First, $\phi(\tvth{})\le \phi(\vz{},\tvth{})$, so
that $\phi(\vz{},\tvth{})$ is lower bounded if $\phi(\tvth{})$ is (which, like
\cite{Opper:05}, we assume). Next, as shown in the Appendix, we can
very efficiently minimize $\phi(\vz{},\tvth{})$ locally w.r.t.\ $\tvth{}$
by setting $\vrho{}\leftarrow\vz{}$, then iterating between
$(\vu{*},\vth{})\leftarrow\mathtt{PLS}(\vz{},\tvth{})$ and $\vmu{}\leftarrow
\vs{*}=\mxb{}\vu{*}=\Ex_Q[\vs{}|\vy{}]$. In the sequel, we denote this
subalgorithm by $\tvth{}'\leftarrow\mathtt{updateTTil}(\vz{},\tvth{})$. While
$\mathtt{updateTTil}$ may call $\mathtt{PLS}$ multiple times, it does not
require expensive Gaussian variance computations. An ``optimistic'' step of our
algorithm updates $\vz{}'\leftarrow\Var_Q[\vs{}|\vy{}]$, then
$\tvth{}'\leftarrow\mathtt{updateTTil}(\vz{}',\tvth{})$, at the cost of one
variance computation. Within the schema \eqp{ep-schematic}, we update $\vz{}$,
set $\vrho{}\leftarrow\vz{}$, then attain expectation consistency and
$\vmu{}\stackrel{!}{=}\Ex_Q[\vs{}|\vy{}]=\vs{*}=\mxb{}\vu{*}$ for fixed
variances $\vz{}$, $\vrho{}$.

Suppose we are at a point $\vz{}, \tvth{}$ (and $\vth{}$), so that $\tvth{}$ is
a local minimum point of $\phi(\vz{},\tvth{})$. How can we descend:
$\phi(\vz{}',\tvth{}') < \phi(\vz{},\tvth{})$ unless $\tvth{}$ is a stationary
point of $\phi(\tvth{})$? Let $\vth[1]{}=\vth{}$. The optimistic step would be
$\vz[1]{}=\Var_Q[\vs{}|\vy{}]$, then
$\tvth{}'\leftarrow\mathtt{updateTTil}(\vz[1]{},\tvth{})$. If
$\phi(\vz[1]{},\tvth{}')$ is sufficiently smaller than $\phi(\vz{},\tvth{})$,
we are done with our descent step: $\vz{}'=\vz[1]{}$. Otherwise, we
run one iteration $\vth[1]{}\to\vth[2]{}$ of the inner optimization
$\max_{\vth{}}\phi(\vth{},\tvth{})$
of \cite{Opper:05}. This requires variance
computations, while $\vz[1]{}$ can be reused (and $\vz[2]{}$ may already be
computed). We set $\vth{}\leftarrow\vth[2]{}$ and attempt another optimistic
step: $\vz[2]{}$, $\mathtt{updateTTil}(\vz[2]{},\tvth{})$. Without intervening
descent, we would eventually obtain
$\vth[k]{}=\max_{\vth{}}\phi(\vth{},\tvth{})$, thus $\vz[k]{} = \argmin_{\vz{}'}
\phi(\vz{}',\tvth{})$. If no descent happens from there, $\tvth{}$ must be
a stationary point of $\phi(\tvth{})$ (see \cite{Opper:05} and Appendix).

Note that in most cases in practice, our algorithm does not run into the
inner optimization of \cite{Opper:05} even once (unless to confirm final
convergence). Yet the possibility of doing so is what makes our convergence
proof work. \algref{main-alg} provides a schema.

\begin{algorithm}[ht]
\begin{algorithmic}
\STATE{$\Delta(a,b) := (b-a)/\max\{|a|,|b|,10^{-9}\}$. \\
  Iterate over $\vz{}, \tvth{}\leftrightarrow(\vmu{},\vrho{})$.}
\REPEAT
  \STATE{$\vpi[1]{}=\vpi{}$.}
  \FOR{$k=1,2,\dots$}
    \STATE{$\vz[k]{}=\Var_Q[\vs{}|\vy{}]$.}
    \STATE{$(\tvth{}',\vth{}')\leftarrow\mathtt{updateTTil}(\vz[k]{},\tvth{})$.}
    \IF{$\Delta(\phi(\vz[k]{},\tvth{}'),\phi(\vz{},\tvth{})) > \eps$}
      \STATE{Sufficient descent: $\vz{}\leftarrow\vz[k]{}$, $\tvth{}\leftarrow
        \tvth{}'$, \\
        $\vth{}\leftarrow\vth{}'$. Leave loop over $k$.}
    \ELSE
      {\color{gray}
      \STATE{Run iteration of $\max_{\vth{}}\phi(\vth{},\tvth{})$: \\
        $\vth[k]{}\to\vth[k+1]{}$. Set $\vth{}\leftarrow\vth[k+1]{}$.}
      \IF{$|\Delta(\phi(\vth[k+1]{},\tvth{}),\phi(\vth[k]{},\tvth{}))| < \eps$}
        \STATE{Converged to stationary point $\tvth{}$: Terminate algorithm.}
      \ENDIF
      }
    \ENDIF
  \ENDFOR
\UNTIL{Maximum number of iterations done}
\end{algorithmic}
\caption{\label{alg:main-alg} Double loop EP algorithm. \newline
  The part shaded in grey was never accessed in our experiments (see text for
  comments).
}
\end{algorithm}

A word of warning about the inner optimization
$\max_{\vth{}}\phi(\vth{},\tvth{})$. From \eqp{ow-il}, it is
tempting to iterate between $\vz{}\leftarrow\Var_Q[\vs{}|\vy{}]$ and
$(\vu{*},\vth{})\leftarrow\mathtt{PLS}(\vz{},\tvth{})$. However, this
does not lead to descent and typically fails in practice. As seen in
\secref{scalable}, the update of $\vz{}$ serves to refit an {\em upper} bound,
suitable for {\em minimizing}, but not {\em maximizing} over $\vth{}$. In our
algorithm, this problem is compensated by the minimization over $\tvth{}$:
optimistic steps seem to always descend.

\subsection{Computational Details}\label{sec:inner-opt}

In this section, we provide details for computational primitives required
in \algref{main-alg}. First, we show how to efficiently compute
$\mathtt{PLS}$, i.e.\ solve the inner problem in \eqp{ow-il} for fixed
$\vz{}\succ\vzero$. As all $\psi_i(s_{* i})$ are concave, this is a convex
penalized least squares problem, for which many very efficient solvers are
available. A slight technical challenge comes
from the implicit definition of the regularizer: evaluating $\psi_i$ and
its derivatives entails a bivariate convex minimization.

In our experiments, we employ a standard gradient-based Quasi-Newton optimizer.
Suppose we are at $\vu{*}$ and have determined the maximizer $\vth{} =
(\vpi{},\vb{})$. If
$f(\vu{*}) = \sigma^{-2}\|\vy{}-\mxx{}\vu{*}\|^2 - \sum_i\psi_i(s_{* i})$, then
$\psi_i'(s_{* i}) = \partial_{s_{* i}}\psi_i(s_{* i},\pi_i,b_i) = 2(b_i - \pi_i s_{* i})$,
so that $\nabla_{\vu{*}}f(\vu{*}) = 2\sigma^{-2}\mxx{}^T(\mxx{}\vu{*}-\vy{}) +
2\mxb{}^T(\vpi{}\circ\vs{*}-\vb{})$, at the cost of one matrix-vector
multiplication (MVM) with $\mxx{}^T\mxx{}$, $\mxb{}^T$, $\mxb{}$ respectively
(here, ``$\circ$'' denotes the componentwise product). For the bivariate
minimizations, the derivatives are $\partial_{b_i}\psi_i =
2(s_{* i} - \Ex_{\hat{P}_i}[s_i])$, $\partial_{\pi_i}\psi_i = -(z_i+s_{* i}^2) +
\Ex_{\hat{P}_i}[s_i^2]$: we have to adjust $b_i,\pi_i$ so that mean and
variance of $\hat{P}_i$ coincides with $s_{* i}$ and $z_i$. Details for the
computation of $\hat{P}_i$ are given in \cite{Seeger:07d}. In our
implementation, we initialize the minimization by two standard EP updates,
then run Newton's algorithm (details are given in a longer paper). Even for
large $q$, these bivariate minimizations can often be done more rapidly than
MVMs with $\mxx{}^T\mxx{}$. Moreover, they can be solved
in parallel on graphics hardware.

The inner optimization $\max_{\vth{}}\phi(\vth{},\tvth{})$ of \cite{Opper:05}
can be addressed by any convex solver. We employ Quasi-Newton once more. The
gradients are $\partial_{\vb{}}\phi(\vpi{},\vb{},\tvth{}) = 2(
(\Ex_{\hat{P}_i}[s_i])-(\Ex_Q[s_i]) )$,
$\partial_{\vpi{}}\phi(\vpi{},\vb{},\tvth{}) = (\Ex_Q[s_i^2]) -
(\Ex_{\hat{P}_i}[s_i^2])$. This computation entails $\vz{}=\Var_Q[\vs{}|\vy{}]$.
Note
that with a standard solver, a sufficient increase in $\phi(\vth{},\tvth{})$
(for fixed $\tvth{}$) may require a number of $\Var_Q[\vs{}|\vy{}]$
computations. We are not aware of an effective way to decouple this problem
as in \secref{scalable}.

\subsubsection*{Gaussian Variances}

Finally, how do we compute Gaussian variances $\vz{}=\Var_Q[\vs{}|\vy{}] =
\diag(\mxb{}\mxa{}^{-1}\mxb{}^T)$? This is by far the most expensive
computation in all EP algorithms discussed here: our main contribution is a
novel convergent algorithm which requires few of these calls. In our
experiments, $n$ is a few thousand, $q\approx 3 n$, and we can maintain an
$n\times n$ matrix in memory. We use the identity
\[
  \vz{}=\diag\left( \mxb{}\mxa{}^{-1}\sum\nolimits_i\vdelta{i}
  \vdelta{i}^T\mxb{} \right) = \sum\nolimits_i
  (\mxb{}\mxa{}^{-1}\vdelta{i})\circ(\mxb{}\vdelta{i}),
\]
where $\vdelta{i} = (\Ind{j=i})_j$.
We compute the Cholesky decomposition $\mxa{}=\mxl{}\mxl{}^T$, then
$\mxa{}^{-1}$ from $\mxl{}$, using LAPACK code, then accumulate
$\vz{}$ by $2 n$ MVMs with $\mxb{}$.

\begin{figure}[t]
\begin{centering}
\includegraphics[width=1\columnwidth]{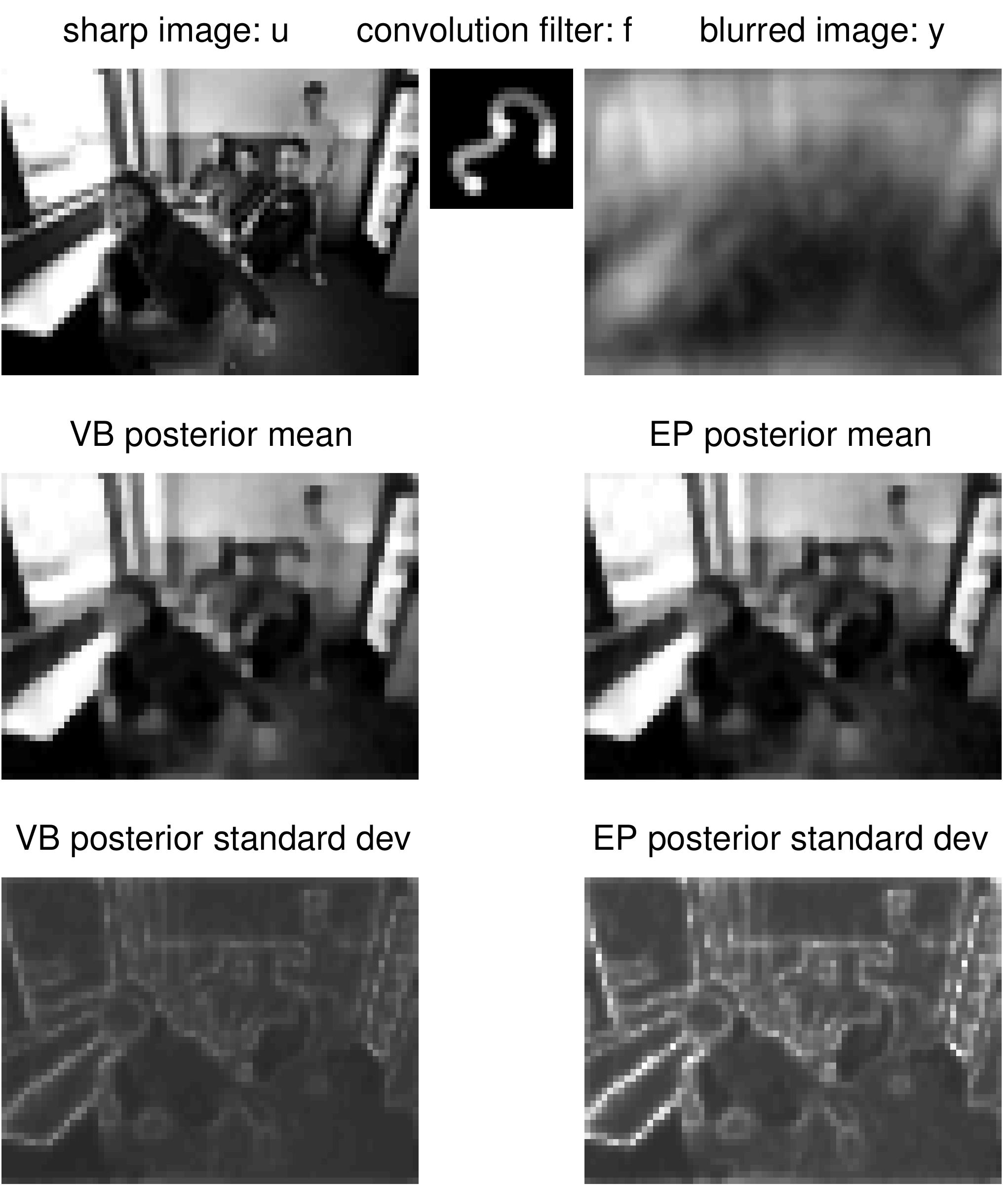}
\par\end{centering}
\caption{\label{fig:deconv_margs} Deconvolution setting and resulting
  marginals (variances on $\vu{}$, not on $\vs{}$). $\vu{}$ is $48\times 73$,
  pixels, the kernel $\vf{}$ is $22\times 25$ ($n=3504$, $q=10512$,
  $\tau_a=\tau_r=15$, $\sigma^2=10^{-5}$).
}
\end{figure}

If $n$ is larger than $10^4$ or so, this approach is not workable anymore.
If $\mxa{}$ is very sparse, it may possess a sparse Cholesky decomposition
which can be determined efficiently, in which case $\vz{}$ is determined
easily \cite{Gerven:10}. However, for typical image reconstruction models,
$\mxx{}$ is dense. For the VB relaxation of \secref{scalable},
variances have been approximated by the Lanczos algorithm \cite{Seeger:08a,
Nickisch:09}. It is noted in \cite{Seeger:10a} that variances are strongly
(but selectively) underestimated in this way, and consequences for the VB
double loop algorithm are established there: in a nutshell, while
outcomes are qualitatively different, the algorithm behaviour remains
reasonable. In contrast, if any of the EP algorithms discussed in this paper
are run with Lanczos variance approximations, they exhibit highly erratic
behaviour. Parallel EP \cite{Gerven:10} rapidly diverges, our variant
ends in numerical breakdown. While we are lacking a complete explanation for
these failures at present, it seems evident that the expectation consistency
conditions, whose structure is more complicated than the simple VB bound, do
not tolerate strong variance errors. Our observation underlines the thesis of
\cite{Seeger:10a}. Robustness to variance errors of the kind produced by
Lanczos becomes an important asset of variational inference relaxations, at
least if large scale inference is to be addressed. The EP relaxation, as it
stands, does not seem to be robust in this sense. Explaining this fact, and
possibly finding a {\em robust modification} of the expectation consistency
conditions, remain important topics for future research.

\section{Experiments}\label{sec:exper}

\subsection{Expectation Propagation vs.\ VB}\label{sec:exp-epvb}

In the following experiment, we compare approximate inference outcomes of EP
(\secref{expprop}) and VB (\secref{scalable}), complementing previous
studies \cite{Kuss:05,Nickisch:08}. We address the (non-blind)
deconvolution problem for image deblurring (details ommitted here are found in
\cite{Levin:09}): $\vu{}\in\R^n$ represent the desired sharp image, $\mxx{} =
(\diag\tvf{})\mxf{n}$, where $\mxf{n}$ is the $n\times n$ discrete Fourier
transform (DFT)\footnote{
  Strictly speaking, we encode $\C$ by $\R^2$, and $\mxf{n}$ is the
  ``real-to-complex'' DFT (closely related to the discrete cosine transform).
  Both $\tvf{}$ and $\tvy{}$ are Hermitian and can be stored as $\R^n$
  vectors.},
$\tvf{}=\mxf{n}\vf{}$ the spectrum of the blur kernel $\vf{}$, and
$\vy{}=\mxf{n}\tvy{}$, $\tvy{}$ the blurry image. Our model setup is similar
to what was previously
used in \cite{Seeger:10a}: $P(\vu{})$ is a Laplace sparsity prior (see
\secref{intro}), the transform $\mxb{}$ consists of an orthonormal wavelet
transform $\mxb{a}$ and horizontal/vertical differences $\mxb{r}$
(``total variation''), corresponding prior parameters are $\tau_a$, $\tau_r$.
Recall that $\vb{}$ is fixed\footnote{
  This is an inherent feature of the variational bound, which would cease to
  be valid if $\vb{}$ were optimized over.}
depending on the $t_i(\cdot)$ in VB: since they are
even, $\vb{}=\vzero$. In contrast, they are free variational parameters in
EP. Posterior marginals, as approximated by EP and VB, are shown
in \figref{deconv_margs}, while we compare parameters $\vb{}$, $\vpi{}$ in
\figref{deconv_params}.

The EP and VB approximations are substantially different. While the means
are visually similar, EP's posterior variances are larger and show a more
pronounced structure. An explanation is offered by the striking differences
in final parameters $\vb{}$, $\vpi{}$. Roughly, $\pi_i$ scales the degree of
penalization of $s_i$ \cite{Seeger:07d}. While both EP and VB strongly
penalize certain coefficients, VB (in contrast to EP) seems to universally
penalize all $s_i$ (all $\pi_{\text{VB},i}>10$), thus may produce small
variances simply by overpenalization. EP clearly makes use of $\vb{}$, which
allow to control the posterior mean independent of the
covariance: a mechanism not available for VB. It is important to note that
our findings are in line with those in \cite{Nickisch:08}, who found that
VB strongly underapproximated marginal variances (they obtained the ground
truth by expensive Monte Carlo simulations). As noted in \secref{intro}, it is
often the posterior uncertainty estimates (covariances) which give Bayesian
decision-making an edge over point estimation approaches.

\subsection{EP Timing Comparison}
\label{sec:exp-timing}

In this section, we provide timing comparisons between EP algorithms discussed
in this paper. Our setup is much the same as in \secref{exp-epvb}, but both
the choice of $\mxx{}$ and data is taken from
\cite{Seeger:10a}. The problem is inference over images $\vu{}\in\R^n$ from
``Cartesian MRI'' measurements (discrete Fourier coefficients) $\vy{}\in\C^m$,
so
that $\mxx{}=\Id_{J,\cdot}\mxf{n}$, where $J$ is an index selecting acquired
coefficients (in fact, complete columns in DF space (``phase encodes'') are
sampled, according to a design optimized for natural images). The prior is the
same as used above.

\begin{figure}[h]
\begin{centering}
\includegraphics[width=1\columnwidth]{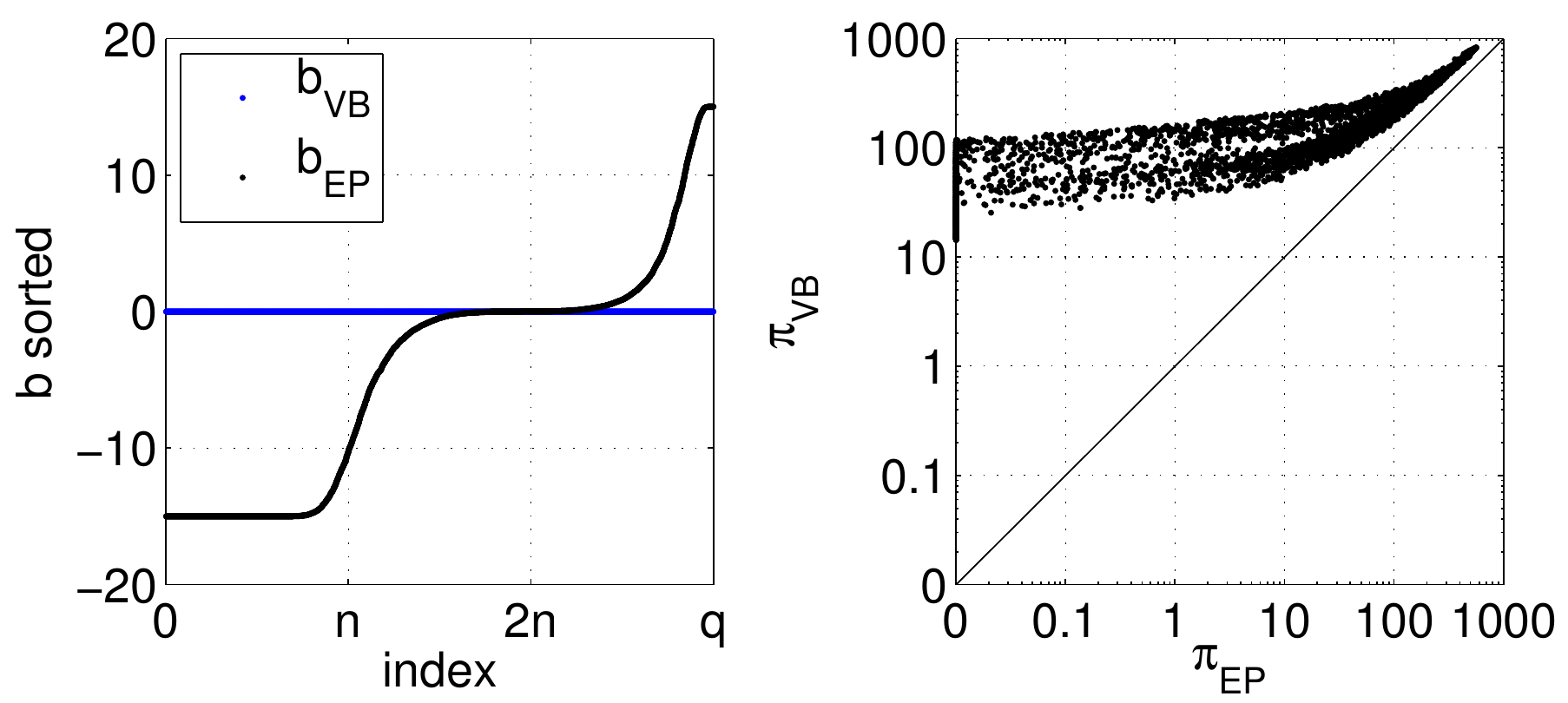}
\par\end{centering}
\caption{\label{fig:deconv_params} Final parameters for
  deconvolution. Left: $\vb{}$ sorted ($\vb{\text{VB}}=\vzero$ by
  construction). Right: $\vpi{}$.
}
\end{figure}

\begin{figure*}[t]
\begin{centering}
\includegraphics[width=1\columnwidth]{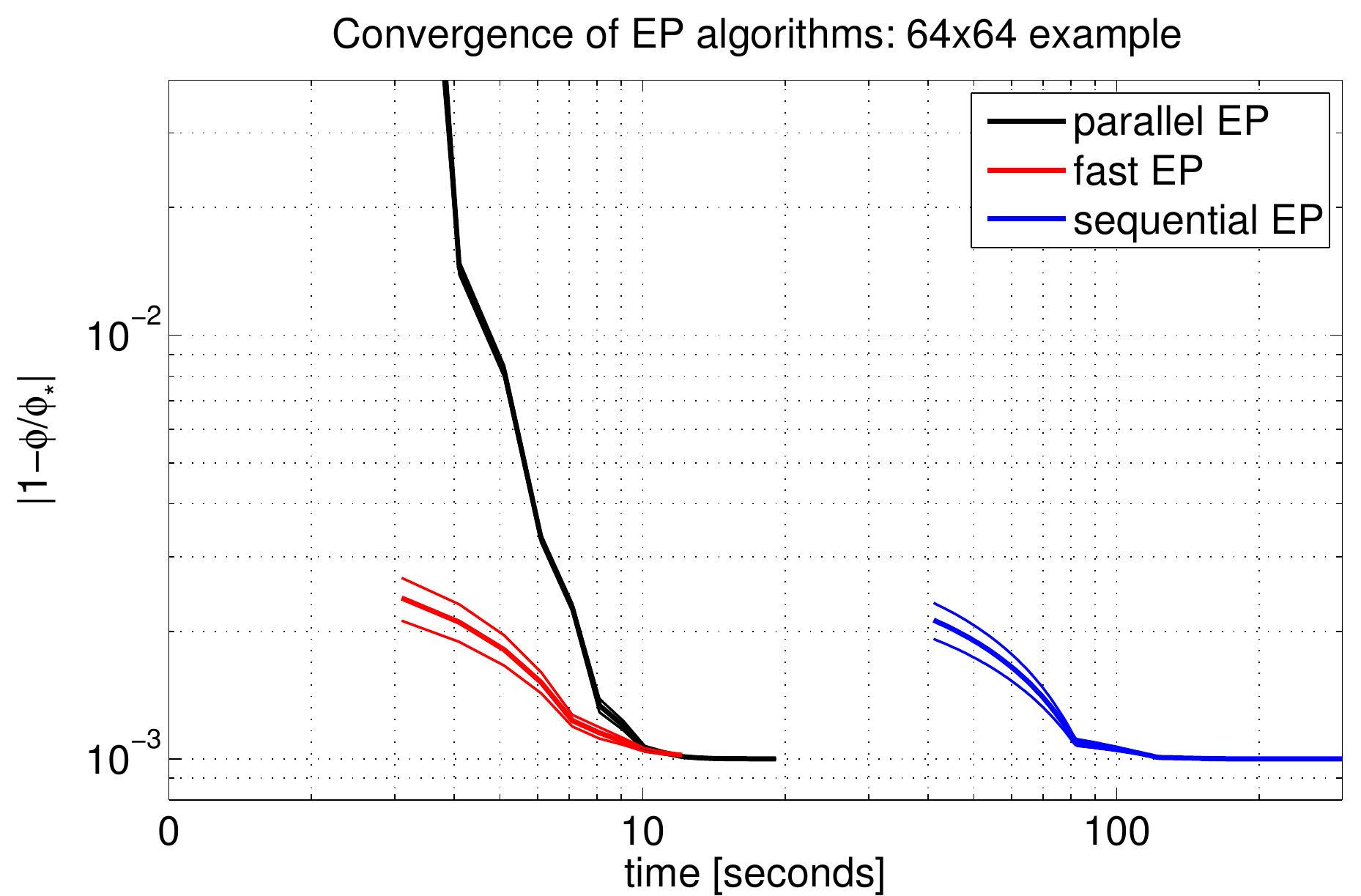}\hfill
\includegraphics[width=1\columnwidth]{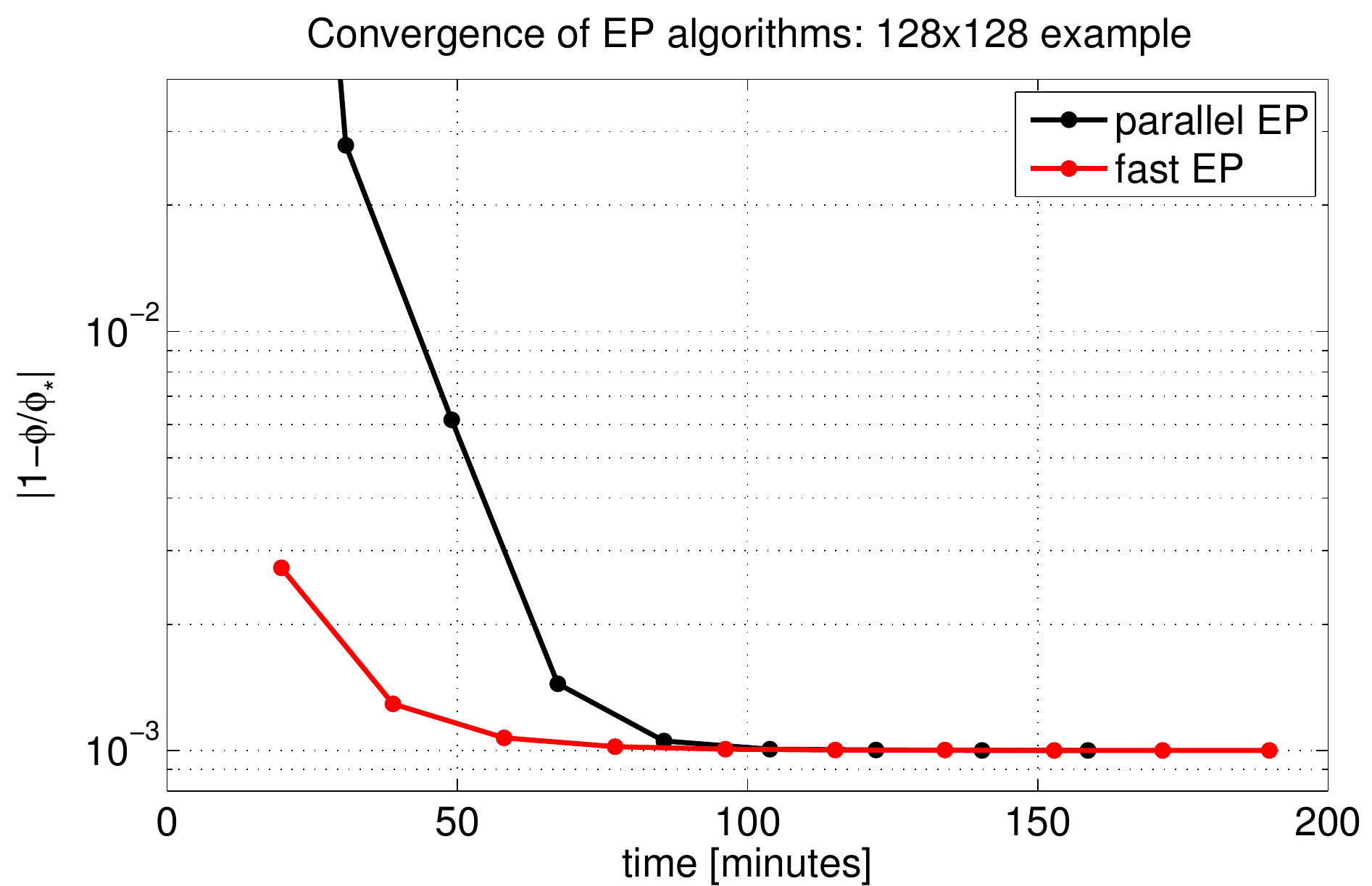}
\par\end{centering}
\caption{\label{fig:results} Timing comparison of EP algorithms for inference
  over greyscale images. Left: $64\times 64$ images. Right: $128\times 128$
  image. Shown is relative distance to EP energy stationary point
  $|(\phi-\phi_*)/\phi_*|$ as function of running time (left: mean, two
  std.\ over 20 different images). \newline
  Algorithms: sequential
  EP (\secref{expprop}; left only), parallel EP (\secref{expprop}), and fast EP
  (our method).
}
\end{figure*}

In our first experiment, we use $64\times 64$ images ($n=4096$, $q=12160$) and
a design $\mxx{}$ sampling 16 columns ($m=1024$, 4 times undersampled). We
compare the sequential and parallel EP algorithms with our novel fast
(convergent) EP method. We chose not to include results for the double loop
algorithm of \cite{Opper:05}, since it runs even slower than the sequential
method (see comments in \secref{inner-opt}). Our results are averaged over
20 different images (the $\vy{}$ vectors are noisy acquisitions, $\sigma^2 =
10^{-3}$, but the same across methods). Moreover, $\tau_a=0.04/\sigma$,
$\tau_r=0.08/\sigma$ (same values as in \cite{Seeger:10a}). Timing runs were
done on an otherwise unloaded standard desktop machine. For each run, we
stored tupels $(T_j,\phi_j)$ at the end of each outer iteration (for
sequential EP, this is a sweep over all potentials), $T_j$ elapsed
time (in secs), $\phi_j$ the EP energy value attained. On a fixed image, all
methods eventually attained the same energy value\footnote{
  While this is not guaranteed by present EP convergence theory, it happened
  in all our cases.}
(say, $\phi_*$), and we show $(T_j,|(\phi_j-\phi_*)/\phi_*|)$. Results are
presented in \figref{results}, left.
First, the sequential algorithm is not competitive with the others. At a time
when the others converged, it is roughly $1/4$ through its first sweep (while
requiring about four sweeps to converge). Second, the parallel and our fast
EP algorithm converge in about the same time. However, ours does so much more
smoothly and attains a near optimal solution more quickly.

In a second experiment, we use a single $128\times 128$ image
($n=16384$, $q=48896$) and a design $\mxx{}$ sampling 36 columns ($\approx
3.5$ times undersampled). We compare the parallel with our fast EP algorithm,
since the sequential method is clearly infeasible at this scale. Here,
$\sigma^2 = 2\cdot 10^{-4}$, $\tau_a=0.04/\sigma$, $\tau_r=0.08/\sigma$.
Results are presented in \figref{results}, right. On this larger problem,
our algorithm converges significantly faster.

Our method (fast EP in \figref{results}) is provably convergent, while
parallel EP (and sequential EP) lacks such a guarantee. Beyond, the main
difference between fast and parallel EP lies in how thoroughly variance
computations are exploited. Fast EP spends more effort between them,
solving $\min_{\tvth{}}\phi(\vz{},\tvth{}) = \min_{\tvth{}}\max_{\vth{}}
\phi(\vth{},\vz{},\tvth{})$, while parallel EP simply does a single EP
update. Our method therefore incurs an overhead, which motivates the results
for $64\times 64$ images. However, this overhead is modest (each step of
$\mathtt{PLS}$ costs $O(q + n\log n)$), while the cost for variances, at
$O(n (n^2 + q))$, grows very fast. The overhead for fast EP pays off in the
$128\times 128$ image example, due to the fact that it requires about two
variance computations less than parallel EP to attain convergence. Notably,
the overhead cost can still be greatly reduced by
running different algorithms (see \secref{discuss}) or parallelizing the
computations of the $\psi_i(s_{* i})$, which is not done in our implementation.

\section{Discussion}\label{sec:discuss}

We proposed a novel, provably convergent algorithm to solve the expectation
propagation relaxation of Bayesian inference. Based on the insight that the
most expensive computations by far in any variational method concern Gaussian
variances, we exploit a decoupling trick previously used in
\cite{Wipf:08,Nickisch:09} in order to minimize the number of such
computations. Our method is at least an order of magnitude faster than the
commonly used sequential EP algorithm, and improves on parallel EP
\cite{Gerven:10}, the previously fastest solver we are aware of, both in
running time and guaranteed convergence.
Moreover, it is in large parts similar to recent algorithms for other
relaxations \cite{Nickisch:09}, which allows for transfer of efficient code.
While the sequential EP algorithm is most widely used today, our results
indicate that this is wasteful even for small and medium size problems and
should be avoided in the future.

There are numerous avenues for future work. First, for problems of the
general form discussed in \secref{exper}, the central penalized least
squares primitive $\mathtt{PLS}$ could be solved more efficiently by employing
modern augmented Lagrangian techniques, such as the ADMM algorithm reviewed
in \cite{Combettes:10} (today's most efficient sparse deconvolution
algorithms are based on this technique), and by parallelizing the innermost
bivariate optimization problems leading to $\psi_i(s_{* i})$ and its derivatives.
Such measures would bring down the (already modest) overhead of our technique,
compared to parallel EP. Moreover, we aim to resolve whether the
``optimistic steps'' our algorithm is mainly based on, provably lead to
descent by themselves (this would render the fallback on \cite{Opper:05},
shaded in \algref{main-alg}, obsolete, thus simplify the code).

Known EP algorithms (including ours presented here) break
down in the presence of substantial Gaussian variance approximation errors,
in contrast to algorithms for simpler relaxations which behave robustly. If
real-world Bayesian image applications such as those in \secref{exper} are to
be run at realistic sizes, variance errors cannot be avoided. The most
important future direction is therefore to understand the reason for this
non-robustness of EP algorithms (or even the expectation-consistency conditions
as such) and to seek for alternatives which combine the accuracy of this
relaxation with good behaviour in the presence of typical Gaussian variances
approximation errors \cite{Seeger:10a}.

\subsection*{Appendix}

We start by reviewing the convergence proof for the EP double loop algorithm
of \secref{expprop} \cite{Opper:05}. The problem is $\min_{\tvth{}}
\max_{\vth{-}} \phi_{\cap}(\vth{-},\tvth{}) + \phi_{\cup}(\tvth{})$. Now,
$\phi_{\cap}(\tvth{}) = \max_{\vth{-}} \phi_{\cap}(\vth{-},\tvth{})$ is concave.
If $\vth{-}=\argmin\phi(\vth{-},\tvth{})$, then $\phi(\tvth{}')\le R(\tvth{}')
:= \phi_{\cap}(\vth{-},\tvth{}) - \vg{}^T(\tvth{}'-\tvth{}) +
\phi_{\cup}(\tvth{}')$, where $\vg{} = -\nabla_{\tvth{}}\phi_{\cap}(\tvth{}) =
-\partial_{\tvth{}}\phi_{\cap}(\vth{-},\tvth{})$ \cite[ch.~12]{Rockafellar:70}.
If $\vth{} = \eta^{-1}(\tvth{}-\vth{-})$, then $\vg{} = \partial_{\tvth{}}
2\log Z_Q = \eta^{-1}(\Ex_Q[\vs{}|\vy{}], -\frac{1}2\Ex_Q[\vs{}^2|\vy{}])$. Now,
$\phi_{\cup}(\tvth{})=R(\tvth{})$, and $R(\tvth{}')$ is convex, its minimum
defined by $\nabla_{\tvth{}'}\phi_{\cup}(\tvth{}') = \vg{}$. Therefore,
minimizing $R(\tvth{}')$ leads to $\phi(\tvth{}')<\phi(\tvth{})$, unless
$\vg{} = \nabla_{\tvth{}}\phi_{\cup}(\tvth{})$, thus
$\nabla_{\tvth{}}\phi(\tvth{}) = \vzero$. Since the sequence $\phi(\tvth{})$
is nonincreasing and lower bounded, it must converge to a stationary point.
To determine $\vg{}$, note that if $\vu{*}$ is the minimizer in \eqp{ow-il},
then $\Ex_Q[\vs{}|\vy{}]=\vs{*}=\mxb{}\vu{*}$ and
$\Ex_Q[\vs{}^2|\vy{}] = \vs{*}^2 + \Var_Q[\vs{}|\vy{}]$. Moreover, since
$\phi(\tvth{}')$ is the sum of log partition functions of $N(\mu_i,\rho_i)$,
the equation $\nabla_{\tvth{}'}\phi_{\cup}(\tvth{}') = \vg{}$ is solved by
$\vmu{}'=\vs{*}$, $\vrho{}'=\Var_Q[\vs{}|\vy{}]$.

Importantly, exactly the same argument establishes the convergence (to a
stationary point) of $\min_{\tvth{}'}\phi(\vz{},\tvth{}')$ for any fixed
$\vz{}\succ\vzero$, thus the computation of $\mathtt{updateTTil}$ in
\secref{scal-ep}. We only have to replace $\log|\mxa{}(\vpi{})|$ by
$\vz{}^T\vpi{} - g^*(\vz{})$ (both are concave in $\vth{}$, therefore concave
in $(\vth{-},\tvth{})$), noting that the gradient w.r.t.\ $\vpi{}$ changes
from $\nabla_{\vpi{}}\log|\mxa{}| = \Var_Q[\vs{}|\vy{}]$ to $\nabla_{\vpi{}}
(\vz{}^T\vpi{} - g^*(\vz{})) = \vz{}$. The only difference to the algorithm
of \cite{Opper:05} just discussed is that $\vrho{}$ is updated to $\vz{}$,
not to $\Var_Q[\vs{}|\vy{}]$, so that variances do not have to be computed.

Next, we establish the properties of the inner loop problem
$\max_{\vth{}}\phi_{\cap}(\vth{},\tvth{})$
(Eqs.~\ref{eq:phi_cap}, \ref{eq:ow-il}).
In particular, we prove that strong duality holds. Recall that
$\vv{}=(\vz{},\vu{*})$ and
$\phi_{\cap}(\vv{},\vth{})$ from \eqp{phi_cap}. We begin by extending
$\phi_{\cap}(\vv{},\vth{})$ for all values of $\vz{}$ and $\vpi{}$
\cite{Rockafellar:70}. First, $g^*(\vz{}) = \inf_{\vpi{}}\vz{}^T\vpi{} -
\log|\mxa{}(\vpi{})|$ is the {\em concave} dual function of
$\log|\mxa{}(\vpi{})|$. Since $\log|\mxa{}|\to\infty$ whenever any
$\pi_i\to\infty$ \cite{Seeger:08d}, then $g^*(\vz{})\to -\infty$ as any
$z_i\searrow 0$, and $\phi_{\cap} := +\infty$ if any $z_i\le 0$. Moreover,
$\phi_{\cap} := -\infty$ if $\vz{}\succ\vzero$ and any $\pi_i<0$, and
$\phi_{\cap}(\vv{},\vpi{},\vb{}) := \lim_{\tvpi{}\searrow\vpi{}}
\phi_{\cap}(\vv{},\tvpi{},\vb{})$ for any $\vpi{}\succeq\vzero$. With these
extensions, it is easy to see that $\phi_{\cap}(\vv{},\vth{})$ is a closed
proper concave-convex function \cite[ch.~33]{Rockafellar:70}: convex in
$\vv{}$ for each $\vth{}$, concave in $\vth{}$ for each $\vv{}$.
Note that we always have that $\max_{\vth{}}\min_{\vv{}}\phi_{\cap}\le
\min_{\vv{}}\max_{\vth{}}\phi_{\cap}$ (weak duality). In order to establish
equality (strong duality), we show that $\phi_{\cap}(\cdot,\vth{})$ do not have
a common nonzero direction of recession.
Given that, strong duality follows from \cite[Theorem~37.3]{Rockafellar:70}.

\begin{theo}
Let $\phi(\vv{},\vth{})$ be defined as in \eqp{phi_cap}, and extended to a
closed proper concave-convex function. If $\vth{}=(\vpi{},\vb{})$ is such that
$\vpi{}\succ\vzero$ and $\mxa{}(\vpi{})$ is positive definite, then
$\phi(\cdot,\vth{})$ has no nonzero direction of recession. For any
$\vd{}\ne\vzero$ and any $\vv{}$ so that $\phi(\vv{},\vth{})<\infty$:
\[
  \lim_{t\to\infty} \frac{\phi(\vv{}+t\vd{},\vth{})-\phi(\vv{},\vth{})}t > 0.
\]
\end{theo}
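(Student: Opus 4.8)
The plan is to fix $\vth{}=(\vpi{},\vb{})$ as in the hypothesis and first observe that, as a function of $\vv{}=(\vz{},\vu{*})$, the criterion \emph{separates}. In \eqp{phi_cap} the block $\vz{}$ enters only through $f_z(\vz{}):=\vz{}^T\vpi{}-g^*(\vz{})$, the block $\vu{*}$ enters only through $f_u(\vu{*}):=R(\vpi{},\vb{},\vu{*})$, and the term $-2\eta^{-1}\sum_i\log\hat{Z}_i$ depends on $\tvth{}$ and $\vth{}$ alone, hence is constant in $\vv{}$. Writing $\vd{}=(\vd{z},\vd{u})\ne\vzero$ and recalling that for a convex function the difference quotient $t^{-1}[\phi(\vv{}+t\vd{},\vth{})-\phi(\vv{},\vth{})]$ is nondecreasing in $t$ (so the limit exists in $(-\infty,+\infty]$ and is independent of the base point $\vv{}\in\dom\phi(\cdot,\vth{})$), this limit splits additively into a $\vz{}$-contribution coming from $f_z$ and a $\vu{*}$-contribution coming from $f_u$. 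It therefore suffices to show each contribution is $>-\infty$, and that the contribution of a nonzero block is strictly positive.

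For the $\vu{*}$-part this is immediate: $f_u$ is a quadratic whose Hessian is $2\mxa{}(\vpi{})=2(\sigma^{-2}\mxx{}^T\mxx{}+\mxb{}^T(\diag\vpi{})\mxb{})$, positive definite by hypothesis, so $f_u(\vu{*}+t\vd{u})$ grows like $t^2\,\vd{u}^T\mxa{}(\vpi{})\vd{u}$ and its recession limit is $+\infty$ for $\vd{u}\ne\vzero$ and $0$ for $\vd{u}=\vzero$. Hence whenever $\vd{u}\ne\vzero$ the total limit is $+\infty>0$ (the $\vz{}$-contribution being bounded below since $f_z$ is proper convex), and we are done in that case.

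The crux is the remaining case $\vd{u}=\vzero$, $\vd{z}\ne\vzero$, where I must show $\lim_t t^{-1}[f_z(\vz{}+t\vd{z})-f_z(\vz{})]>0$. The key is to avoid differentiating $g^*$ (which, being itself an infimum, gives no control at infinity) and instead use its definition as a \emph{lower} bound on $-g^*$: for every admissible $\vpi{}'$ and every $\vw{}$, $-g^*(\vw{})\ge -\vw{}^T\vpi{}'+\log\deter{\mxa{}(\vpi{}')}$. The plan is to choose $\vpi{}'=\vpi{}-\epsilon\vd{z}$ for small $\epsilon>0$ and substitute $\vw{}=\vz{}+t\vd{z}$; the $\vpi{}$-dependent terms then cancel, leaving $f_z(\vz{}+t\vd{z})\ge t\,\epsilon\|\vd{z}\|^2+\epsilon\vz{}^T\vd{z}+\log\deter{\mxa{}(\vpi{}-\epsilon\vd{z})}$, so that dividing by $t$ and letting $t\to\infty$ yields a recession limit $\ge\epsilon\|\vd{z}\|^2>0$. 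This is exactly where \emph{both} hypotheses are used: $\mxa{}(\vpi{})\succ\vzero$ and $\vpi{}\succ\vzero$ are open conditions, so $\vpi{}$ lies in the interior of the admissible set and, for $\epsilon$ small, $\vpi{}-\epsilon\vd{z}$ remains admissible with $\log\deter{\mxa{}(\vpi{}-\epsilon\vd{z})}$ finite.

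The main obstacle is precisely this $\vz{}$-part: because $g^*$ is a concave conjugate, one cannot read off its growth directly, and the right move is to replace $-g^*$ by a single affine-plus-logdet surrogate anchored at a perturbation $\vpi{}-\epsilon\vd{z}$ of the given parameter, engineered so its linear part annihilates $\vz{}^T\vpi{}$ and exposes the strictly positive slope $\epsilon\|\vd{z}\|^2$. The only delicate bookkeeping is verifying that this perturbed point stays in the open region where $\mxa{}\succ\vzero$ and $\vpi{}'\succ\vzero$, which is exactly what the two hypotheses guarantee; everything else (separability and the positive-definite-quadratic recession) is routine.
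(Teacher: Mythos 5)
Your proposal is correct and takes essentially the same route as the paper's proof: the positive definite quadratic in $\vu{*}$ disposes of $\vd{u}\ne\vzero$, and the $\vz{}$-direction is handled by bounding $g^*$ from above via its defining infimum evaluated at a perturbed parameter so that a strictly positive slope survives --- precisely the paper's move, with its coordinate perturbation $\tvpi{}=\vpi{}-(\pi_i/2)\vdelta{i}$ replaced by your $\vpi{}-\epsilon\vd{z}$. Your uniform perturbation even absorbs the paper's separate case $(\vd{z})_i<0$ (there dispatched through the $+\infty$ extension of $\phi_{\cap}$), but this is a minor streamlining rather than a genuinely different argument.
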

\begin{proof}
Write $F(\vv{}) = \phi_{\cap}(\vv{},\vth{})$ for brevity, and pick any
$\vd{}\ne\vzero$. $\vd{}$ is a direction of recession iff $\lim_{t\to\infty}
(F(\vv{}+t\vd{})-F(\vv{}))/t\le 0$ for some $\vv{}$
\cite[Theorem~8.5]{Rockafellar:70}. Pick any $\vv{}=(\vz{},\vu{*})$,
$\vz{}\succ\vzero$, and let $\vd{}=(\vd{z},\vd{u})$. If $\vd{u}\ne\vzero$, then
$F(\vv{}+t\vd{}) = \Omega(t^2)$ by the positive definite quadratic
part. If $(\vd{z})_i<0$ for any $i$, then there is some $t_0>0$ so that
$(\vz{}+t\vd{z})_i$ is negative and $F(\vv{}+t\vd{})=\infty$ for all $t\ge t_0$.
This leaves us with $\vd{u}=\vzero$,
$\vd{z}\succeq\vzero$, so that $(\vd{z})_i>0$ for some $i$. Let
$\tvpi{} = \vpi{} - (\pi_i/2)\vdelta{i}$.
By definition, $g^*(\vz{}+t\vd{z})\le (\vz{}+t\vd{z})^T\tvpi{} -
\log|\mxa{}(\tvpi{})|$, therefore
\[
\begin{split}
  & \frac{F(\vv{}+t\vd{}) - F(\vv{})}{t} = \vd{z}^T\vpi{} +
  \frac{g^*(\vz{}) - g^*(\vz{}+t\vd{z})}{t} \\
  \ge\, & \vd{z}^T(\vpi{}-\tvpi{}) +
  \frac{g^*(\vz{}) + \log|\mxa{}(\tvpi{})| - \vz{}^T\tvpi{}}{t} \\
  =\, & \pi_i(\vd{z})_i/2 + \frac{g^*(\vz{}) + \log|\mxa{}(\tvpi{})| -
  \vz{}^T\tvpi{}}{t},
\end{split}
\]
which is positive as $t\to\infty$.
\end{proof}



\end{document}